\def\BibTeX{{\rm B\kern-.05em{\sc i\kern-.025em b}\kern-.08em
    T\kern-.1667em\lower.7ex\hbox{E}\kern-.125emX}}
\newcommand\methodName{\textit{MiD}}
\newcommand{\linebreakand}{
  \end{@IEEEauthorhalign}
  \hfill\mbox{}\par
  \mbox{}\hfill\begin{@IEEEauthorhalign}
}
\xpatchcmd{\@thm}{\thm@headpunct{.}}{\thm@headpunct{}}{}{}
\newtheorem{prop}{Proposition}
\begin{document}

\title{Power of Explanations: Towards automatic debiasing in hate speech detection
\thanks{
Yi Cai and Gerhard Wunder are supported by the Center of Trustworthy AI (www.zvki.de) funded by the Federal Ministry of Enviroment, Nature Conservation, Nuclear Safety and Consumer Protection (BMUV).
Gerhard Wunder is also supported by the German excellence cluster 6GRIC (6g-ric.de) funded by the Federal Ministry of Education and Research (BMBF) as well as German Science Foundation (DFG) priority programs under the grants WU 598/7-2, WU 598/8-2 and WU 598/9-2.

Yi Cai was also supported by the ResponsibleAI program funded by Nds. MWK during the period of completing this work. 
}
}

\author{
\IEEEauthorblockN{Yi Cai}
\IEEEauthorblockA{\textit{Dept. of Math. and Comp. Science} \\
\textit{Freie Universität Berlin}\\
Berlin, Germany \\
yi.cai@fu-berlin.de}
\and
\IEEEauthorblockN{Arthur Zimek}
\IEEEauthorblockA{\textit{Dept. of Math. and Comp. Science} \\
\textit{University of Southern Denmark}\\
Odense, Denmark \\
zimek@imada.sdu.dk}
\linebreakand
\IEEEauthorblockN{Gerhard Wunder}
\IEEEauthorblockA{\textit{Dept. of Math. and Comp. Science} \\
\textit{Freie Universität Berlin}\\
Berlin, Germany \\
gerhard.wunder@fu-berlin.de}
\and
\IEEEauthorblockN{Eirini Ntoutsi\IEEEauthorrefmark{1}\thanks{\IEEEauthorrefmark{1}Eirini Ntoutsi was affiliated with Freie Universität Berlin where most of the work was carried out.}}
\IEEEauthorblockA{\textit{Research Institute CODE} \\
\textit{Universität der Bundeswehr München}\\
Munich, Germany \\
eirini.ntoutsi@unibw.de}
}

\maketitle

\begin{abstract}
Hate speech detection is a common downstream application of natural language processing (NLP) in the real world.
In spite of the increasing accuracy, current data-driven approaches could easily learn biases from the imbalanced data distributions originating from humans.
The deployment of biased models could further enhance the existing social biases.
But unlike handling tabular data, defining and mitigating biases in text classifiers, which deal with unstructured data, are more challenging.
A popular solution for improving machine learning fairness in NLP is to conduct the debiasing process with a list of potentially discriminated words given by human annotators.
In addition to suffering from the risks of overlooking the biased terms, exhaustively identifying bias with human annotators are unsustainable since discrimination is variable among different datasets and may evolve over time.
To this end, we propose an automatic misuse detector (MiD) relying on an explanation method for detecting potential bias.
And built upon that, an end-to-end debiasing framework with the proposed staged correction is designed for text classifiers without any external resources required.
\end{abstract}

\begin{IEEEkeywords}
AI fairness, bias detection, bias mitigation, explainable AI, text classification
\end{IEEEkeywords}

\section{Introduction} \label{sec:intro}
Although the recent breakthrough led by attention mechanism~\cite{vaswani2017attention} is beneficial to the increasing performance in downstream tasks~\cite{li2022survey}, the growing complexity of models leads to more concerns about machine learning fairness due to the opacity~\cite{wallace2019allennlp, muller2020ethics}.
Previous work shows that bias widely exists in various corpora for natural language processing (NLP)~\cite{dixon2018measuring, schick2021self} and can be easily learned by even the most advanced text classifiers~\cite{kennedy2020contextualizing}.
Hate speech detection as one of the downstream tasks has been widely applied on social media platforms.
Biases held by these detectors will certainly harm the right of specific groups to be referred to or express themselves.
Solving bias in this scenario is therefore crucial. 

With years of discussion on machine learning fairness, the majority puts their efforts into defining and mitigating bias in classifiers for tabular data.
But the unstructured nature of textual data forbids the direct use of numerous debiasing methods~\cite{iosifidis2019fae, iosifidis2019adafair, hardt2016equality} specialized for structured data.
A popular solution for identifying bias in NLP is to manually select a list of words from the given vocabulary, which usually refer to demographic information (for example, gender and ethnicity), then measure the performance difference under similar contexts for various groups as bias~\cite{kiritchenko2018examining, garg2019counterfactual}.
Having the bias defined, the improvement of text classifiers in terms of fairness is feasible through different approaches, such as instance weighting~\cite{zhang2020demographics}, data augmentation~\cite{rudinger2018gender}, and feature attribution suppression~\cite{kennedy2020contextualizing, yao2021refining}.
However, using a manually defined list can again introduce discrimination to the debiasing process by either under-/over-representing demographic groups in the list as shown in Fig.~\ref{fig:attr_example}.
\begin{figure}
    \centering
    \includegraphics[width=0.4\textwidth]{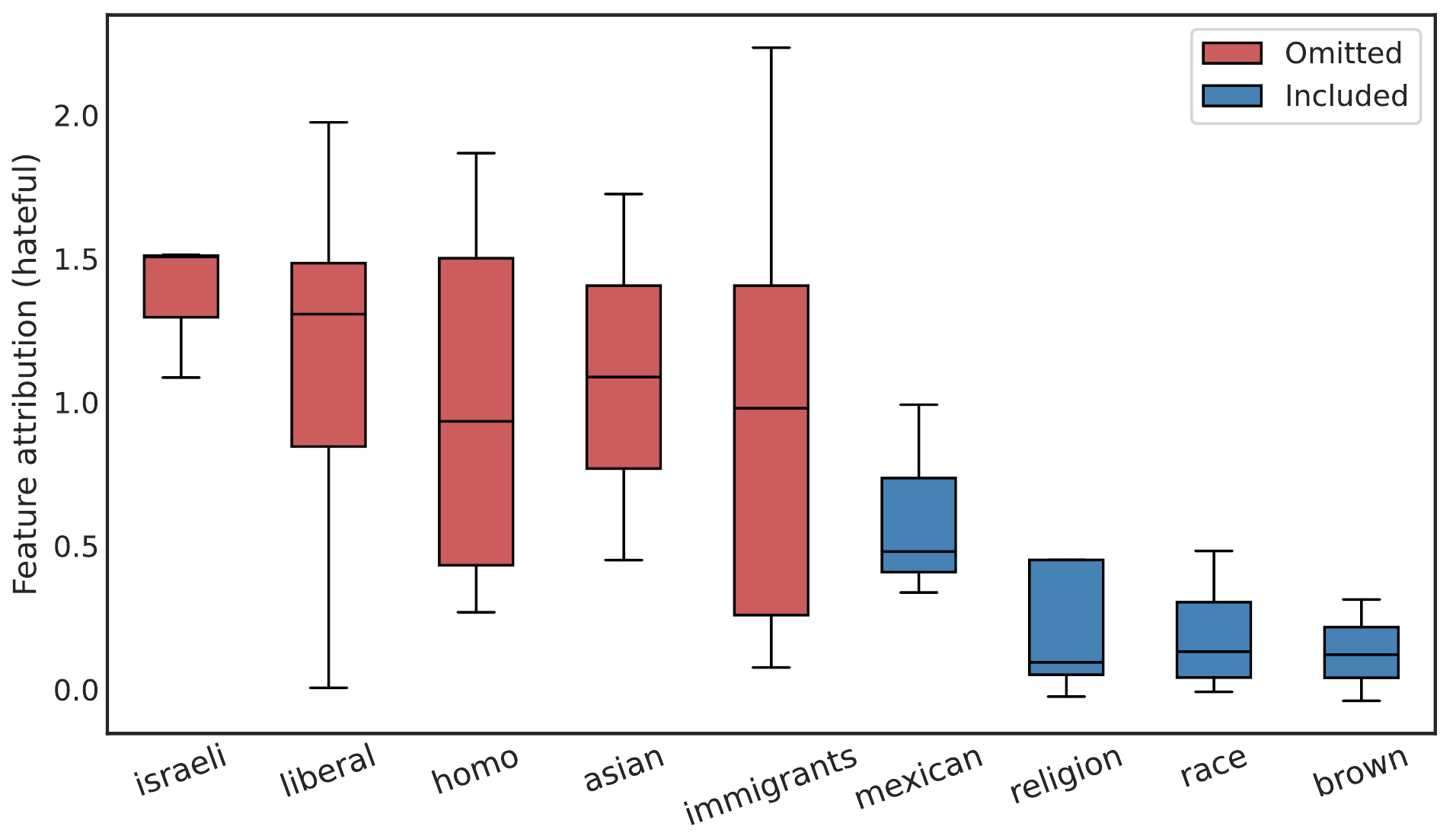}
    \caption{Feature attribution extracted from a fine-tuned BERT model without debiasing for words omitted/included in the manual list used in~\cite{kennedy2020contextualizing}.
    The omitted neutral group identifiers actively contributing to the prediction as hateful, which indicates strong biases, are overlooked by human annotators. 
    On the contrary, some less biased words are selected for the debiasing list.
    }
    \label{fig:attr_example}
\end{figure}
While omitting discriminated group identifiers can undoubtedly strengthen specific biases, over-representation in the debiasing list could also limit model performance through indirect impacts as demonstrated by our experimental results.
Moreover, discrimination learned by models is variable to datasets, hyperparameters, and training processes.
Hence, exhaustively defining bias through human annotators is unsustainable.

To this end, we proposed a fully automatic \underline{mi}suse \underline{d}etector (\methodName), which deploys an explanation method, to identify potentially biased features (tokens/words) at the training stage without any external knowledge or resources.
Aiming to perform the detection at runtime, we introduced the false positive proportion \textit{FPP} as an efficient proxy of feature contributions to predictions since the time complexity of feature attribution examination with the explanation method is impractical.
Built upon that, an end-to-end debiasing framework\footnote{The source code is available at https://github.com/caiy0220/PoE} is designed to mitigate bias in hate speech classifiers during the training process.
The experimental results show that the list of potentially biased words delivered by the proposed \methodName~has good coverage on the manually defined list and results in an outstanding performance in balancing accuracy and fairness.
Another key finding is that the correction targeting single words has indirect impacts on their semantic neighbors, which calls for more attention to potential effects the debiasing process could bring into hate speech detectors.

The rest of the paper is organized as follows. In Section~\ref{sec:related}, we discuss the related work of machine learning fairness and current progress in explainable AI (XAI), which is an essential component of the proposed framework.
Section~\ref{sec:mid} details the proposed method \methodName~for automatic bias detection.
Afterwards, the debiasing framework with a staged correction is introduced in Section~\ref{sec:staged}.
To evaluate the proposed method, we conduct detailed experiments in Section~\ref{sec:exp}.
Finally, we discuss and conclude the findings of this work in Section~\ref{sec:concl}.


\section{Related work} \label{sec:related}
To mitigate discrimination in hate speech classifiers, there are debiasing methods at different stages of the machine learning pipeline proposed.
Data augmentation tackles the problem at the data preprocessing stage.
Instances relating to under-represented groups are augmented through random combinations of pre-defined templates and group identifiers~\cite{zhao2018gender, kiritchenko2018examining, dixon2018measuring}.
But augmentation without supervision always delivers meaningless and unrealistic instances, which could introduce potential risks into the system.
Similar to the previous solution, instance weighting balances the distribution of class labels over ethnic groups by editing the weights of entries in the training set~\cite{zhang2020demographics} during the preprocessing.
Without creating synthetic samples, it avoids the potential risks.

Our work is highly related to the recently emerging work which mitigates bias at the training stage.
The idea is to debias hate speech classifiers during training by suppressing unwanted model attention on selected neutral words~\cite{kennedy2020contextualizing}.
A static list of group identifiers gives the definition of the sensitive neutral words.
Instead of preparing a pre-defined list, \cite{yao2021refining} suggests correcting model behaviors by involving human-in-the-loop.
For both methods, the essential part is the employment of an explanation method, which dominates the derivation of feature attribution.

A feature/word is not necessary to be biased even if it frequently appears in incorrect predictions.
Therefore, obtaining insights into the decision making process is the key to the precise treatment for its bias.
Recent developments in XAI, especially in explaining text classifiers~\cite{chen2020generating, murdoch2018beyond, cai2021xproax}, offer a solution to the detailed analysis by revealing the evidence supporting a decision through explanations.
Among different categories, model-agnostic local explanation methods~\cite{ribeiro2016should, lundberg2017unified} are in favor with debiasing text classifiers,
as methods belonging to this category would not harm the flexibility of the debiasing framework since no prerequisites are present for the targeting model, 
In addition, because state-of-the-art text classifiers treat the same token differently depending on the context~\cite{peters2017semi, sarzynska2021detecting}, we prefer a local explanation method that interprets one decision at a time rather than a global one.
Regarding the concrete form of presenting explanations, feature importance~\cite{vstrumbelj2014explaining} outperforms other choices, such as saliency map~\cite{smilkov2017smoothgrad} and counterfactual~\cite{naumann2021consequence}, as it quantifies feature contributions to a prediction, which is accessible to both machine and human.



\section{Automatic misuse detector} \label{sec:mid}

Before introducing the misuse detector, we specify the targeting bias to mitigate in this paper.
Imbalanced data distribution is a common problem while solving tasks with data-driven approaches.
The same problem also holds for hate speech detection.
For example, there are more toxic speeches against specific demographic groups on social media platforms because of social bias and stereotypes.
Hate speech detectors trained on these datasets could be biased while observing the high co-occurrence between the group identifiers and the hateful sentiment, which means they may use sensitive identifiers (e.g., ``Muslim'', ``Jewish'') as evidence for their prediction.
This kind of bias is highly undesired and is referred to as ``wrong reasons'' since it is an obvious misuse of features.
Here, we summarize the misuse into two cases:
\begin{itemize}
    \item \textit{\textbf{Wrong for wrong reasons}}: a model delivers \underline{wrong} decisions based on \underline{incorrect} reasoning of the observations.
    \item \textit{\textbf{Right for wrong reasons}}: a model delivers \underline{right} decisions based on \underline{incorrect} reasoning of the observations.
\end{itemize}
Regarding the first case, it is natural that problematic inference produces wrong predictions. 
However, models could also coincidentally output accurate classification for biased reasons.
Detecting misuses in the second case is extremely difficult as we cannot distinguish unreasonable right decisions from the justified ones without external resources.
Therefore, we decided to concentrate on bias and misuse that result in wrong decisions as the first step towards automatic bias detection in text classification.


In the following parts of this section, we introduce the proposed automatic misuse detector (\methodName).
It firstly filters out potentially biased words using an efficient proxy of feature importance (Section~\ref{sec:mid_base}, Section~\ref{sec:mid_proxy}) and then conducts a more detailed investigation using an explanation method to finalize the list of discriminated words (Section~\ref{sec:mid_detect}).

\subsection{Wrong for wrong reasons} \label{sec:mid_base}
Given a hate speech detector $\mathcal{M}(\cdot)$ and a dataset $\mathcal{D}$ containing input texts $x\in\mathcal{D}$ with each text consisting of a list of tokens (words) in sequential order $x=\{w_0, w_1, ..., w_n\}$, we define a wrong reason for wrong decisions in \eqref{eq:def_wrong} as 
a feature $w^*$ whose average contribution $\bar{\phi}(w^*)$ to the misclassified instances is greater than a given threshold $\tau$,
\begin{equation} \label{eq:def_wrong}
    \bar{\phi}(w^*)_{\hat{y}\neq y}=\mathbb{E}_{\hat{y}\neq y}[\phi(w^*)]>\tau 
\end{equation}
where $\phi(\cdot)$ denotes the function of deriving feature importance, $\hat{y}$ denotes the prediction on an instance containing the wrong reason $w^*$, and $y$ denotes the ground truth of the corresponding instance.

For the sake of simplicity, we here assume the given model $\mathcal{M}$ to be a binary classifier with the output value ranging from 0 to 1, where an output close to $1$ indicates the positive class ($\hat{y}=1$) and vice versa.
The contribution of a feature to the prediction can be measured using feature importance score (also known as feature attribution) which is the output of explanation methods. 
We choose a state-of-the-art explanation method named Sampling and Occlusion (SOC)~\cite{jin2019towards} for our debiasing framework, which defines feature importance as:
\begin{equation*}
    \phi(w^*)=\mathbb{E}_{\mathcal{X}_\delta}[\mathcal{M}(x)-\mathcal{M}(x\backslash w^*)]
\end{equation*}
where $\mathcal{X}_\delta$ denotes the context of the relevant input $x$.
The context consists of the neighboring instances derived from the input $x$ by randomly masking words with padding tokens.
The contribution of the feature $w^*$ is measured by the expected impact of excluding the target feature from all variants in the given context.
With the contribution defined, an intuitive solution for identifying wrong reasons would be finding features whose importance scores are highly correlated to the occurrence of misclassifications.
But this is unrealistic at runtime as computations for explaining all inputs are unaffordable.
To this end, we propose in the next subsection a loosened proxy of attribution for filtering out the suspicious features, which allows us to conduct the detection in real-time.

It has to be mentioned that other popular explanation methods (e.g., SHAP~\cite{lundberg2017unified}), which is differentiable, could also be adopted, but SOC is in favor for the efficiency reason. 
It can concentrate on the filtered out features rather than the whole input and thus helps reduce the computational complexity.

\subsection{False positive proportion as a proxy of feature attribution} \label{sec:mid_proxy}
It is more urgent to solve bias that leads to misclassification as hateful than the other way around.
Hence, we focus on the false positive instances\footnote{Instances ought to be negative but misclassified as positive.}. 
But the same theory also applies to the opposite case.
For the false positive decision made on the input $x$, a feature which is mainly responsible for the prediction possesses the average importance score $\bar{\phi}(w^*)_{FP}>\tau$.
Although finding words with high attribution values which frequently appear in false positive instances could be an ideal way to determine the potentially misused features, computations carried by explanation methods are usually expensive \cite{rieger2020interpretations}. 
It becomes unbearable if such computations have to be carried throughout the whole vocabulary since the size of vocabulary in up-to-date NLP models can easily go up to the order of tens of thousands~\cite{chen2019large}.
Therefore, we use the false positive proportion as an efficient proxy for the detection of the aforementioned wrong reasons supported by the Proposition \ref{proxy_prop}.
The false positive proportion of a feature $FPP_w$ is defined in \eqref{fpp_w}.
\begin{equation}\label{fpp_w}
    \begin{split}
        FPP_w=&~\frac{FP_w}{FP_w+FN_w+TP_w+TN_w}
    \end{split}
\end{equation}
Note that the FPP is slightly different from the false positive rate by definition.
It is the ratio of false positive samples among all relevant samples rather than the relevant negatives.

\begin{prop} \label{proxy_prop}
For a wrong reason $w^*$ where $\phi(w^*)>\tau$, its false positive proportion satisfies $FPP_{w^*}>\tau$.
\end{prop}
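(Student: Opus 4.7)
The plan is to reduce the proposition to a convex decomposition of the average SOC attribution of $w^*$ over the four confusion-matrix outcomes, and then squeeze the sum using the natural range of $\mathcal{M}$. Since $\mathcal{M}(x) \in [0,1]$, the SOC score $\phi(w^*)_x = \mathbb{E}_{\mathcal{X}_\delta}[\mathcal{M}(x) - \mathcal{M}(x \backslash w^*)]$ lies in $[-1,1]$ instance-wise, and in particular satisfies $\phi(w^*)_x \leq 1$ everywhere. This trivial ceiling is all I need on the FP bin; the real work goes into the remaining three bins.

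First I would partition the set of inputs containing $w^*$ into the TP/FP/TN/FN subsets and write the overall average attribution as a convex combination
\[
\bar{\phi}(w^*) \;=\; \sum_{C \in \{TP, FP, TN, FN\}} \frac{|C_w|}{N_w}\,\bar{\phi}(w^*)_C,
\]
where $N_w = TP_w + FP_w + TN_w + FN_w$. The weight attached to the FP term is exactly $FPP_{w^*}$ by definition \eqref{fpp_w}, which is the structural reason the claim has a chance of holding at all.

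Next I would combine two ingredients. On the FP bin use $\bar{\phi}(w^*)_{FP} \leq 1$ from the range of $\mathcal{M}$. On the non-FP bins, invoke the interpretation of $w^*$ as a \emph{wrong reason for wrong decisions} to argue that the average SOC score of $w^*$ there is non-positive: on TN the feature must pull the prediction below threshold (otherwise the instance would have been an FP); on FN and TP the hypothesis that $w^*$ is a misused feature means its net evidential weight on correctly-handled positive labels is, on average, not what is driving the decision. Substituting gives
\[
\bar{\phi}(w^*) \;\leq\; FPP_{w^*}\cdot 1 + (1-FPP_{w^*})\cdot 0 \;=\; FPP_{w^*},
\]
and therefore $\bar{\phi}(w^*) > \tau$ (the condition behind \eqref{eq:def_wrong} specialised to false positives) forces $FPP_{w^*} > \tau$.

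The step I expect to be delicate is precisely the non-positivity of $\bar{\phi}(w^*)$ on the non-FP bins. Without it, the only usable bound is the vacuous $\bar{\phi}(w^*) \leq 1$, which says nothing about $FPP_{w^*}$. I would therefore either (i) fold this sign condition explicitly into the working notion of a wrong reason for false positives, or (ii) replace the criterion in \eqref{eq:def_wrong} by a total-positive-mass condition of the form $\sum_{x\in FP_w} \phi(w^*)_x > \tau \cdot N_w$, which makes the same convex-combination argument go through cleanly. Clarifying this before writing the inequality chain is the key move; once the definition is pinned down, the rest is a one-line computation.
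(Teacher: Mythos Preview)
Your approach differs substantially from the paper's. The paper does \emph{not} decompose over the four confusion-matrix bins; it works exclusively with the false-positive set $FP_{w^*}$ and reads the hypothesis as $\bar{\phi}(w^*)_{FP_{w^*}} > \tau$, i.e.\ the average SOC score over FP instances containing $w^*$ (the specialisation of \eqref{eq:def_wrong} to $\hat{y}=1,\ y=0$). Expanding the SOC definition gives $\mathbb{E}_{FP_{w^*}}[\mathcal{M}(x)] > \mathbb{E}_{FP_{w^*}}[\mathcal{M}(x\backslash w^*)] + \tau$, which the paper interprets as: the presence of $w^*$ raises the expected model output by more than $\tau$. It then identifies expected output with classification probability to get $\textbf{Pr}(x\in FP\mid w^*) > \textbf{Pr}(x\in FP\mid \bar{w}^*) + \tau$, drops the non-negative right-hand term, and unwinds $\textbf{Pr}(x\in FP\mid w^*)$ into $FPP_{w^*}$. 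No sign condition on the TP/TN/FN bins is ever invoked.

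This exposes the real gap in your route. Your convex-combination bound needs the \emph{overall} average $\bar{\phi}(w^*)$ across all $N_w$ instances to exceed $\tau$, but the paper's hypothesis only controls the FP-bin average $\bar{\phi}(w^*)_{FP}$. With that hypothesis, your inequality $\bar{\phi}(w^*) \leq FPP_{w^*}$ no longer sandwiches $\tau$, and the non-positivity of the non-FP bins becomes load-bearing in a way neither you nor the paper can justify---your options (i) and (ii) effectively concede this by rewriting the definition. The paper avoids the problem by never leaving the FP set; its own informal step is instead the passage from $\mathbb{E}[\mathcal{M}(x)]$ to $\textbf{Pr}(\hat{y}=1)$, which is a modelling approximation rather than an analytic bound. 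In short: your decomposition is cleaner as an inequality, but it proves a different (stronger-hypothesis) statement; the paper's argument matches the stated hypothesis at the price of a heuristic output-to-probability identification, and this is what the later choice of a separate threshold $\eta\geq\tau$ is meant to compensate for.
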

\begin{proof}
Given a feature $w^*$ and the set of false positive instances $FP_{w^*}$ possessing the feature $w^*$, if $w^*$ is responsible for the misclassification, we have:
\begin{align*}
    \bar{\phi}(w^*)_{FP_{w^*}}=&~\mathbb{E}_{FP_{w^*}}[\phi(w^*)] \\
    =&~\mathbb{E}_{FP_{w^*}}[\mathbb{E}_{\mathcal{X}_\delta}[\mathcal{M}(x') - \mathcal{M}(x'\backslash w^*)]] \\
    =&~\mathbb{E}_{FP_{w^*}}[\mathcal{M}(x) - \mathcal{M}(x\backslash w^*)]\\
    =&~\mathbb{E}_{FP_{w^*}}[\mathcal{M}(x)] - 
    \mathbb{E}_{FP_{w^*}}[\mathcal{M}(x\backslash w^*)] > \tau \\
    \iff&~
    \mathbb{E}_{FP_{w^*}}[\mathcal{M}(x)] >
    \mathbb{E}_{FP_{w^*}}[\mathcal{M}(x\backslash w^*)] + \tau 
\end{align*}
where $x'$ indicates an instance from the context $\mathcal{X}_\delta$ and $x$ denotes a text that belongs to the set $FP_{w^*}$.
Here we assume that the average impact of simply masking out $w^*$ from all $x\in FP_{w^*}$ is equal to the mean of the feature importance, which is derived from the expected impact of excluding $w^*$ in the given context $\mathcal{X}_\delta$.
The greater expected model outcome implies that the given text containing the word $w^*$ is more likely to be classified as positive, and thus:
\begin{align*}
    \implies&~
    \textbf{Pr}(x\in FP_{w^*}|w^*)>\textbf{Pr}(x\in FP_{w^*}|\bar{w}^*) + \tau \\
    \iff&~
    \textbf{Pr}(x\in FP_{w^*}|w^*) > \tau \\
    \iff&~
    \frac{\textbf{Pr}(x\in FP_{w^*},w^*)}{\textbf{Pr}(w^*)}>\tau \\
    \iff&~
    \frac{FP_{w^*}/|\mathcal{D}|}{(FP_{w^*}+FN_{w^*}+TP_{w^*}+TN_{w^*})/|\mathcal{D}|}>\tau \\
    \iff&~
    FPP_{w^*} > \tau
\end{align*}
where the symbol $\bar{w}^*$ indicates that the feature $w^*$ is masked out.
\end{proof}

For better efficiency of the proxy, we choose another threshold $\eta$ ($\eta\geq\tau$) for the false positive proportion $FPP_{w^*}$ as a complement of the neglected term $\textbf{Pr}(x\in FP|\bar{w}^*)$ on the right side of the inequation during the proof.
However, words with FPPs satisfying $FPP_w>\eta$ are not directly applicable for the debiasing purpose since the reverse of Proposition \ref{proxy_prop} does not hold as stated in Proposition \ref{lower_prop}.
\begin{prop} \label{lower_prop}
For a word $\dot{w}$, the significance of its false positive proportion ($FPP_{\dot{w}}>\tau$) does not imply that it is a considerable reason for the wrong predictions.
\end{prop}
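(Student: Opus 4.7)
The plan is to refute the implication by exhibiting a counterexample, and to locate the exact source of the asymmetry. The proof of Proposition~\ref{proxy_prop} derived $FPP_{w^*} > \tau$ only after discarding the non-negative term $\textbf{Pr}(x\in FP_{w^*}|\bar{w}^*)$ on the right-hand side of the inequality; this slack is precisely what allows the converse to fail. Concretely, a word $\dot{w}$ can satisfy $\textbf{Pr}(x\in FP_{\dot{w}}|\dot{w}) > \tau$ while the shift $\textbf{Pr}(x\in FP_{\dot{w}}|\dot{w}) - \textbf{Pr}(x\in FP_{\dot{w}}|\bar{\dot{w}})$ stays negligible, in which case $\phi(\dot{w})$ is small and $\dot{w}$ is not a wrong reason in the sense of \eqref{eq:def_wrong}.

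First, I would formalize a construction in which a highly frequent, label-neutral token $\dot{w}$ co-occurs across the corpus, including inside false-positive instances that are in fact misclassified because of some other genuinely discriminative word $w^*$. Because masking $\dot{w}$ barely changes the model output on those texts, reusing the same expectation framework as in the previous proof gives $\mathbb{E}_{FP_{\dot{w}}}[\mathcal{M}(x)] - \mathbb{E}_{FP_{\dot{w}}}[\mathcal{M}(x\backslash \dot{w})] \approx 0$, so $\phi(\dot{w})\ll\tau$. Second, I would verify by direct counting from \eqref{fpp_w} that, whenever the share of false positives among all instances containing $\dot{w}$ exceeds $\tau$, we still have $FPP_{\dot{w}} > \tau$. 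Placing the two observations side by side yields the desired witness.

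The main obstacle I anticipate is not analytical but definitional: one must pin down what ``a considerable reason for the wrong predictions'' means before the negation can be stated precisely. The cleanest route is to adopt the condition in \eqref{eq:def_wrong}, against which Proposition~\ref{proxy_prop} is already phrased; any $\dot{w}$ with near-zero average attribution but a large base rate inside the false-positive set then serves as a counterexample. This same gap between high FPP and genuine causal contribution is exactly what motivates the second-stage filtering in Section~\ref{sec:mid_detect}, where the expensive explanation method is applied only to the shortlist produced by the proxy in order to discard precisely such spurious candidates.
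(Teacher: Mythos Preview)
Your proposal is correct and follows the same overall strategy as the paper---exhibit a word $\dot{w}$ that inherits a high $FPP$ by mere co-occurrence with a genuine wrong reason while carrying negligible attribution itself---but the concrete construction and the derivation differ. The paper assumes \emph{perfect} co-occurrence between $\dot{w}$ and a known wrong reason $w$, i.e.\ $\textbf{Pr}(w)=\textbf{Pr}(w,\dot{w})=\textbf{Pr}(\dot{w})$ and likewise conditional on $x\in FP$; from these two equalities a short Bayes-rule computation gives $\textbf{Pr}(x\in FP\mid\dot{w})=\textbf{Pr}(x\in FP\mid w)>\tau$ exactly, with no constraint whatsoever on $\phi(\dot{w})$. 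Your version instead posits a highly frequent neutral token and argues via direct counting and the slack created by dropping $\textbf{Pr}(x\in FP\mid\bar{w}^*)$ in the proof of Proposition~\ref{proxy_prop}. Your route is more diagnostic (it pinpoints \emph{why} the converse fails and ties it to the choice of $\eta\geq\tau$ made right after Proposition~\ref{proxy_prop}), whereas the paper's route is algebraically cleaner and yields an exact equality rather than an approximate argument. Either construction suffices; the paper's perfect-co-occurrence device just lets the inequality be established in one line without estimating the size of the discarded term.
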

\begin{proof}
Given a word $w$ as a wrong reason, if there exists another word $\dot{w}$ which has unconditional high co-occurrence with $w$ (e.g., for semantic/syntactic reasons), it means:
\begin{align}
    &\textbf{Pr}(w)=\textbf{Pr}(w, \dot{w})=\textbf{Pr}(\dot{w}) \label{eq_pr_overall}\\
    &\textbf{Pr}(w|x\in FP)=\textbf{Pr}(w,\dot{w}|x\in FP)=\textbf{Pr}(\dot{w}|x\in FP) \label{eq_pr_cond}
\end{align}
The statement about the significant false positive proportion holds for $\dot{w}$ without any constraints on its importance score $\phi(\dot{w})$.
\begin{align*}
    \textbf{Pr}(x\in FP| \dot{w})=&~\frac{\textbf{Pr}(\dot{w}|x\in FP)}{\textbf{Pr}(\dot{w})}\cdot\textbf{Pr}(x\in FP)\\
    \overset{\eqref{eq_pr_overall}}{=}&~\frac{\textbf{Pr}(\dot{w}|x\in FP)}{\textbf{Pr}(w)}\cdot\textbf{Pr}(x\in FP)\\
    \overset{\eqref{eq_pr_cond}}{=}&~\frac{\textbf{Pr}(w|x\in FP)}{\textbf{Pr}(w)}\cdot\textbf{Pr}(x\in FP)\\
    =&~\textbf{Pr}(x\in FP| w) > \tau\\
    \iff&~FPP_{\dot{w}}>\tau
\end{align*}
\end{proof}
Proposition \ref{lower_prop} indicates that the false positive proportion is a loosened proxy of the feature attribution, which requires an in-depth analysis of the identified words to condense the word list before it serves for the debiasing task.

Computation of the FPP can be done in linear time (dependent on the size of the dataset) as model predictions are available during training time. 
Hence, its computation is much more efficient than exhaustively performing the chosen explanation method for feature attribution.
According to Proposition \ref{proxy_prop}, we apply FPP as a proxy to filter out the potentially biased words and then utilize the SOC explainer to exclude features with limited contributions from the final list following Proposition \ref{lower_prop}.
Note that obtaining feature importance is practical after the filtering as the number of words under investigation is about $0.2\%$ of the vocabulary size.

\subsection{Misuse detection} \label{sec:mid_detect}
For the trade-off between accuracy and fairness, we intend to debias a model with minimal external restrictions applied. 
A model is updating itself for the given task during training. 
Misconduct of the model at the current step could be self-corrected after finishing further training steps.
Accordingly, the misuse detector in the debiasing framework is designed to examine FPPs of features periodically during the training phase instead of completing the detection at once.
\methodName~integrates a sliding window with size $l$ recording the most recent feature statistics. 
Only features with at least $k$ records in the sliding window, whose FPP exceeds the user-defined threshold $\eta$, namely $FPP_w>\eta$, will be added to the candidate list.
Fig.~\ref{fig:absorb} demonstrates an example of the candidate list absorbing and rejecting the features according to their FPPs.
\begin{figure}[htbp]
    \centering
    \includegraphics[width=0.35\textwidth]{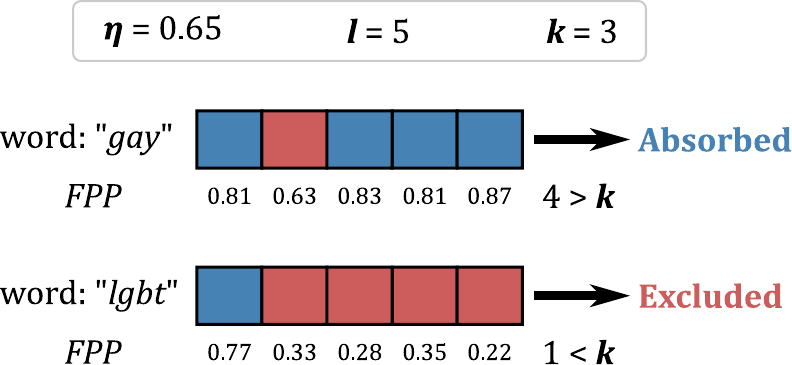}
    \caption{Absorbing and excluding potentially biased features through \textit{FPP}}
    \label{fig:absorb}
\end{figure}

Words filtered out by FPP are candidates of debiasing targets.
And because of Proposition \ref{lower_prop}, investigation on their feature importance scores is mandatory.
Only candidates with an importance score assigned by SOC greater than the given threshold $\tau$ will be inserted into the debiasing list $W$.
The finalized debiasing list $W$ will contribute to the correction of the model.

\section{Staged training} \label{sec:staged}
Based on \methodName~proposed in previous subsection, we introduce a staged training process for automatic debiasing in this section.
The training process consists of three stages: 
i) \textit{vanilla}, 
ii) \textit{correction}, and 
iii) \textit{stabilization}, which are listed following their orders in the training process.
The training process enters the next stage when $n$ training iterations are completed.

At the \textbf{vanilla} stage, no restrictions are applied to the training target.
The misuse detector is recording statistics of suspicious features periodically (as entries in the sliding window) in parallel for the construction of the debiasing list $W$.
Words inserted into $W$ remain in the list even if their importance scores drop below the threshold $\tau$.
The debiasing list $W$ is thus continually growing until the vanilla stage ends.
The \textbf{correction} stage follows the end of the vanilla. 
Misconduct of the model is corrected at this stage with restrictions being applied to the target model based on the debiasing list $W$.
To implement the restrictions, the objective function is updated with a regularized explanation term added to the classification objective $\mathcal{L}$ as follows:
\begin{equation*}
    \mathcal{L}_E=\mathcal{L} + \lambda\sum_{w\in W}|\hat{y}-y|\cdot\phi(w)^2
\end{equation*}
The second term is the regularization term with the strength controlled by a hyperparameter $\lambda$.
Since our focus is solving \textit{wrong for wrong reasons}, penalties are laid solely on misclassified instances containing features listed in $W$.
The debiasing list remains static at this stage regardless of the possible changes of either feature attribution or FPP.
After the correction, the debiasing list is resolved. 
And finally, as the \textbf{stabilization} stage, the model is trained without restrictions for another $n$ steps.

The first two stages are critical in the training process.
Unlike the other debiasing methods for text classifiers, which define the debiasing list manually ahead of the training, we first train the target model for $n$ steps and actively search for the discriminated words.
It encourages the debiasing framework to be more responsive to the actual model behaviors, which could be affected by trivial changes (e.g., varying random seeds~\cite{zhuang2022randomness}) in the training settings.
The variable nature of the training process is another reason why a pre-defined debiasing list should be less favored than maintaining a list at runtime.
We apply the extracted knowledge to enforce the correction at the second stage.
Though it is not mandatory, the stabilization step is arranged to enable the model to optimize itself but at the risk of reappearing recovered bias.

\section{Experiments} \label{sec:exp}
We designed comprehensive experiments to study the performance of the proposed debiasing framework. 
Firstly, we analyze the outputs of \methodName~in a qualitative manner (Section~\ref{sec:exp_mid}).
Secondly, we compare the model which is trained using the staged debiasing framework to its competitors (Section~\ref{sec:exp_debias}). 
Last but not least, an investigation to uncover the indirect impacts of debiasing is conducted.
And we discuss the overall coverage of \methodName~on the manual list considering both direct and indirect debiasing impacts (Section~\ref{sec:exp_impacts}).
The concrete experimental settings are discussed in the coming Section~\ref{sec:exp_setup}.
\subsection{Experiment Details} \label{sec:exp_setup}
\textit{\textbf{Dataset}}:
We evaluate our approach on the "Gab Hate Corpus"~\cite{kennedy2018gab} dataset.
The dataset contains $27,655$ speeches sampled from a social network named "Gab" which is populated by "Alt-right"~\cite{anthony2016inside} with a high rate of hate speech. 
There are $22,036$ instances in the training set with $1941$ of them being labeled as hateful ($y=1$) by human annotators and the remaining as non-hateful ($y=0$).
The validation set has a size of $2,755$ with $245$ instances as hateful.
The test set has the same size as the validation set and includes $264$ positive instances.

\textit{\textbf{Text classifier}}: 
The debiasing target is a text classifier solving the hate speech detection task given by the aforementioned dataset. 
We select BERT~\cite{kenton2019bert}, a state-of-the-art language model based on the attention mechanism~\cite{vaswani2017attention}, for our experiments. 
Although BERT has been utilized for numerous downstream tasks, recent researches demonstrate various biases (e.g. gender~\cite{bhardwaj2021investigating}, ethnic~\cite{ahn2021mitigating} bias) it could induce.
We fine-tune the pre-trained version that is publicly available in the transformers library~\cite{wolf2020transformers}.

\textit{\textbf{Hyperparameters}}:
For the misuse detector \methodName, the two thresholds $\eta$ (for FPP) and $\tau$ (for feature importance score) give the definition of misused terms. 
A strict definition with high thresholds will lead to the risk of overlooking biases; meanwhile, a detector with low thresholds may involve irrelevant tokens and thus cause damage to the model's performance.
We here determine $\eta$ and $\tau$ via grid search and set them to $0.65$ and $0.45$, respectively.
The misuse detector records FPPs of suspicious features every $50$ iteration with the sliding window size $l$ equals $10$ and the requirement of the minimal count $k$ equals $7$.
As for the debiasing framework, the strength of the explanation regularization is $0.1$ following the optimal parameter setting in \cite{kennedy2020contextualizing}.
The training iteration is set to $1,000$ equally for each stage, which indicates $3,000$ training iterations in total.

\textit{\textbf{Competitors}}: 
We compare the model corrected by the proposed framework to the two models with the same structure but trained under various circumstances.
One is trained under the vanilla setting with no restrictions.
The other is trained with the debiasing method proposed in~\cite{kennedy2020contextualizing}, which is considered as a baseline.
The baseline employs a \textit{manually} defined static debiasing list and the strength of the regularization is the same as our method (i.e., $\lambda=0.1$).
The static debiasing restrictions take effect during the whole training process, while ours educates the model for fairness only during the middle stage (correction).
For both the vanilla setting and the baseline, the models are trained for $3,000$ iterations, which is identical to the total amount of training iterations in the staged correction.

\subsection{Evaluating misuse detector} \label{sec:exp_mid}
We first evaluate the misuse detector as it is the base of the proposed debiasing framework.
The list of potentially biased words given by \methodName~during the first stage is presented in Table~\ref{tab:word_table} along with the manually selected ones used in the baseline.
The words shared by the two lists are marked as bold and summarized in the first row.
And for words contained in the manual list, the box plots of the biased feature attribution and the visualization of corresponding FPPs can be found in Fig.~\ref{fig:attr_manual}.
\begin{table}[tbp]
    \caption{Lists of words for debiasing}
    \centering
    \begin{tabular}{cc}
        \hline \\[-1.7mm]
        \makecell[c]{Shared by\\both lists} & \makecell[l]{\textbf{muslim} \textbf{muslims} \textbf{islam} \textbf{islamic} \textbf{jew} \textbf{jews} \textbf{jewish} \textbf{gay}\\
        \textbf{white} \textbf{whites} \textbf{black} \textbf{blacks}}\\
        \\[-1.7mm]
        \hline
        \\[-1.7mm]
        \makecell[c]{Only in\\\methodName} & \makecell[l]{
        immigrant immigrants left liberal democrats communist\\ 
        african racist nazi leftist corrupt migrants liberals traitor \\
        homo rap slave terrorist fucking gender} \\
        \\[-1.7mm]
        \hline
        \\[-1.7mm]
        \makecell[c]{Only in\\Manual list} & \makecell[l]{ woman women democrat allah  lesbian transgender brown \\
        race mexican religion homosexual homosexuality africans}\\
        \\[-1.7mm]
        \hline
    \end{tabular}
    \label{tab:word_table}
\end{table}
\begin{figure}[htbp]
    \centering
    \includegraphics[width=0.45\textwidth]{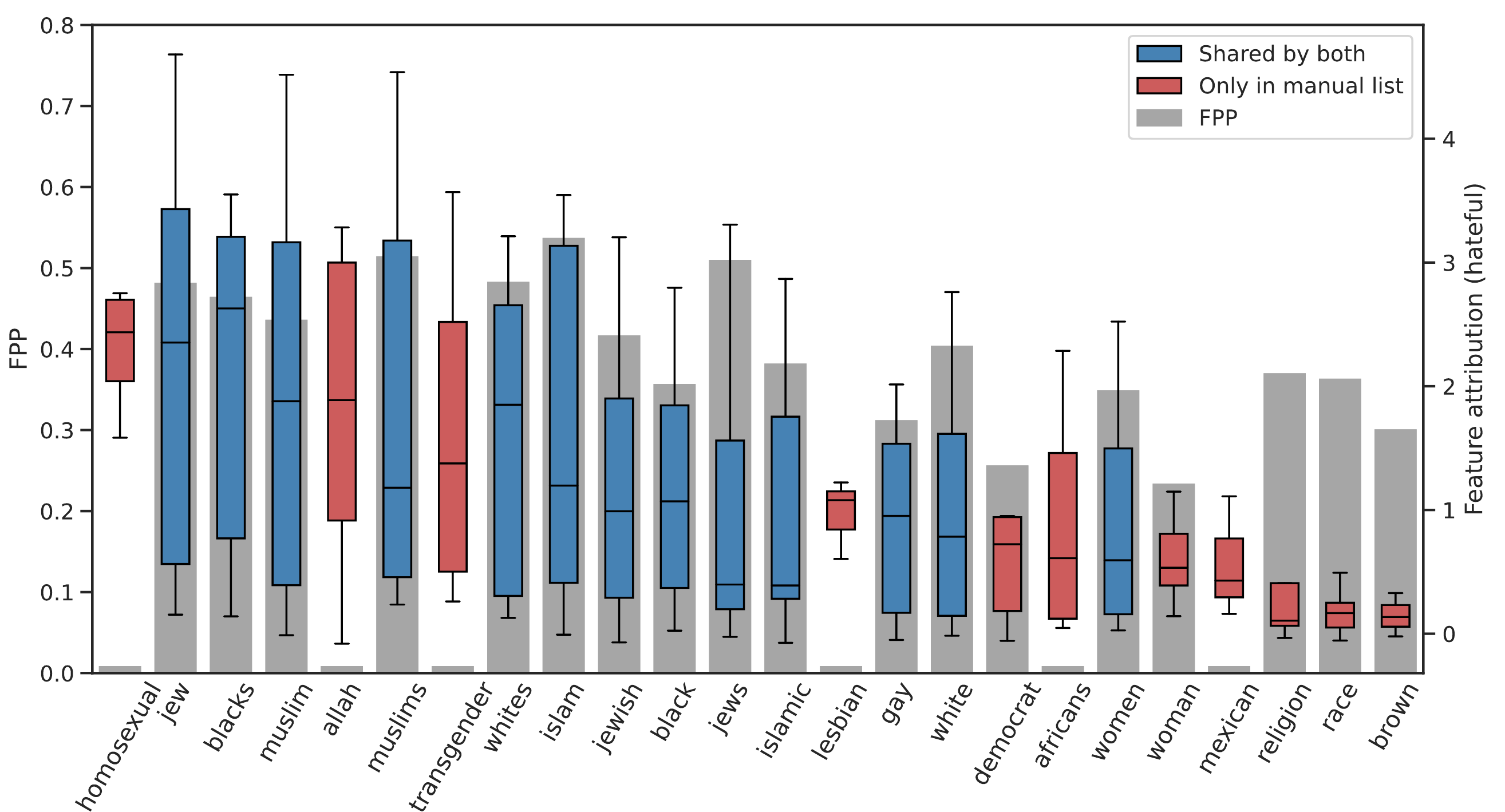}
    \caption{Feature attribution of the manually selected words}
    \label{fig:attr_manual}
\end{figure}
The automatically constructed debiasing list covered 12 out of 25 words given in the manual list, which is considered sensitive and potentially discriminated.
For example, sensitive group identifiers (such as ``\textit{muslim}'', ``\textit{black}'', and ``\textit{gay}'') are covered by both lists.
In addition, \methodName~successfully identifies words referring to demographic information, like ``\textit{immigrant}'' and ``\textit{liberal}'', which are overlooked by the human annotators.

At the same time, we are not surprised that potentially hateful words such as "\textit{nazi}" and "\textit{fucking}" are also listed. 
Unreasonably referring to an individual or a group of people as ``\textit{nazi}'' is truly hateful and should be filtered out before it gets published in public environments. 
But the model may ban proper expressions or discussions (e.g., on histories) if it becomes over-sensitive to the potential evidence for hate sentiments.
Although the main focus of the work is to mitigate biases carried by hate speech classifiers, over-sensitivity towards potential evidence may also be the wrong reason that is responsible for wrong decisions.
It is also a reason why we named the proposed method as misuse detector rather than bias detector.

Furthermore, the box plots of feature attribution in Fig.~\ref{fig:attr_manual} show that the misuse detector excludes less discriminated words like ``\textit{race}'' and ``\textit{brown}'', which exist in the manual list,  from the automatic one. 
Despite the truth that human annotators consider these words discriminated, they play a neutral role (told by the relatively low feature importance scores shown in the box plots) during the decision-making process. 
In principle, we could include all neutral words in the debiasing list for models' fairness. 
But since the experiment in Section~\ref{sec:exp_impacts} demonstrates that attribution suppression on single words would affect their semantic neighbors, we argue that the debiasing list should be concise for model capacities by excluding less biased words.

Apart from the merits, the detector overlooks some manually selected words that are misused, such as ``\textit{homosexual}'' and ``\textit{transgender}''.
Illustrated by their considerably lower FPPs, the main reason for the incompleteness is that these wrong reasons lead to \textbf{right} predictions.
While concentrating on correcting wrong decisions, we selectively neglect the case ``\textit{right for wrong reasons}'' as discussed in Section~\ref{sec:mid}.
Nevertheless, we demonstrate in Section~\ref{sec:exp_impacts} that the debiasing framework lays approving impacts on these words (feature importance scores dropped over $50\%$) even though they are not directly affected.

\subsection{Evaluating debiasing framework} \label{sec:exp_debias}
To reveal the impact of the debiasing framework on training a model, we present in Fig.~\ref{fig:loss} the training losses under different circumstances.
\begin{figure*}[tbp]
    \centering
    \includegraphics[width=1.\textwidth]{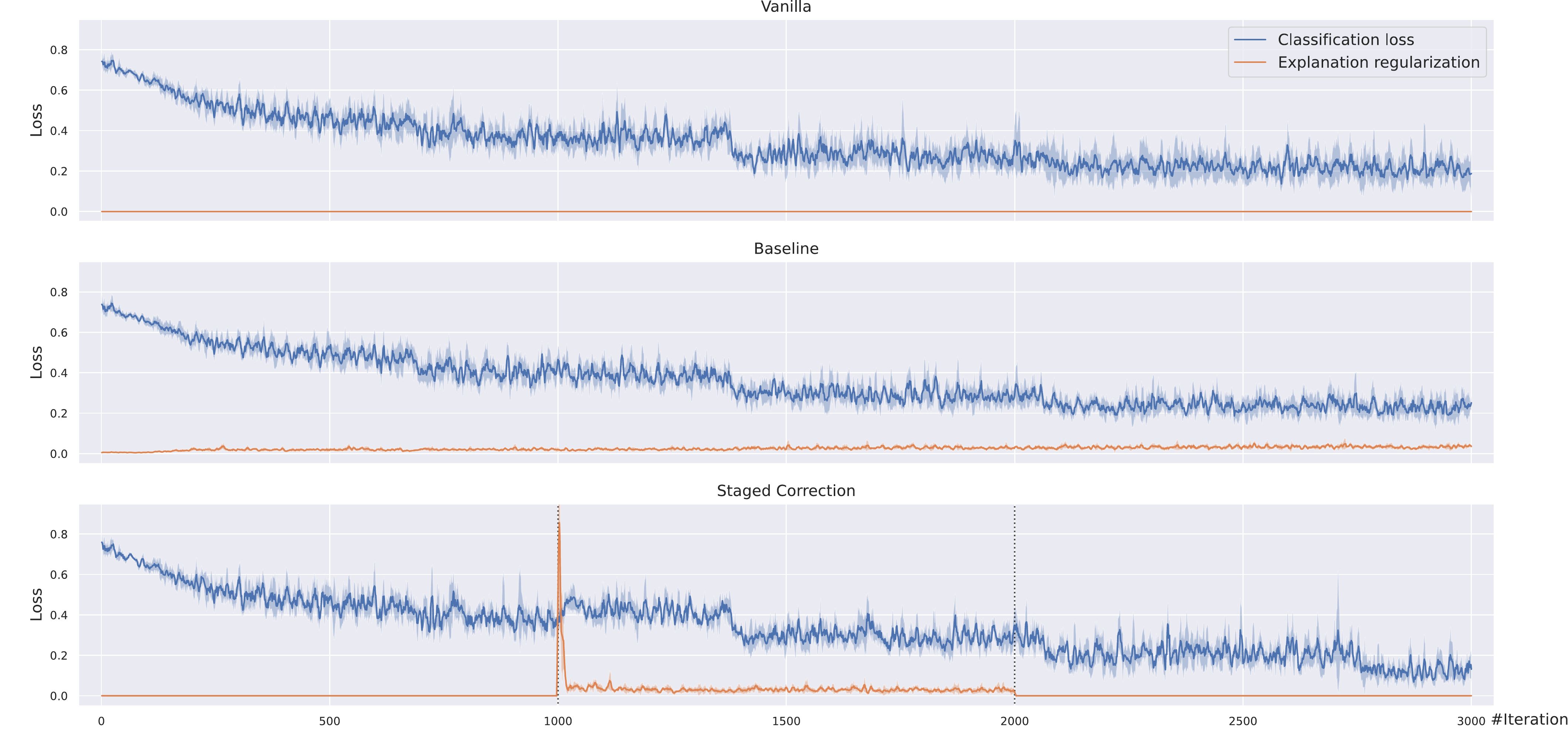}
    \caption{Training loss over iteration with/-out debiasing}
    \label{fig:loss}
\end{figure*}
Having the lines in blue indicate the pure classification loss (cross-entropy), the orange lines refer to the penalties regularized by explanations, and the sum of the two values is the final back-propagated loss.
The model is trained using the same hyperparameters with the only difference in the random seed for 10 times under each setting, and the reported losses are the averaged values of the 10 versions.
For the \textit{vanilla} setting, the penalty introduced by the regularization term is identical to $0$ as no restrictions were applied.
And a decreasing trend can be observed for the loss until around the $1,500_{th}$ iteration.
The \textit{baseline} adopted a pre-defined list for debiasing, which remained in force during the whole process. 
As a consequence of the applied restrictions, the regularization term stays close to $0$, which indicates the extremely low attribution of the debiasing targets.
Finally, the loss fluctuates at a level close to the same value under the vanilla setting  with a similar progressing period.

As for the \textit{staged correction} method, we separate the training stages by the vertical dotted line at the $1,000_{th}$ and $2,000_{th}$ iterations.
The change of the classification loss is similar to the other two settings at the first stage. 
The restrictions for debiasing come into effect when the first stage ended.
Unlike what is shown in the baseline, the penalty here starts with a very high value at the beginning of the correction stage as certain misbehaviors of the model have been detected by \methodName~during the first $1,000$ iterations.
And the model manages to correct the misuse instantly (less than $100$ iterations) following the guidance of the explanations.
At the same time, since the constraints limit the model reasoning its predictions, we observe a notable increase in terms of classification loss.
But the model adapts to the changes quickly and starts improving its classification performance again.
The restrictions are resolved when the last stage begins, which leads to a further decrease in the classification loss.
And surprisingly, the model trained with the staged correction achieved the lowest classification loss among three.
A possible cause of the observation is that explanations as guidance for model attribution can improve its performance \cite{zhang2021explain}.
Although we do not bring the full expected feature attribution, excluding the irrelevant/wrong features may have a similar effect by reducing the search space.

For a more comprehensive study on the impact of debiasing approaches on the classification task, we also report the performance of the three versions of the same model in Table~\ref{tab:accuracy}.
And again, the model is trained $10$ times with various random seeds for each setting, and the averaged performance on the test set followed by the standard deviation is presented.
For the staged correction, we store the best performing model on the validation set for each stage separately and then evaluate them with the test set.
\begin{table}[tbp]
\caption{Classification performance on the test set}
    \centering
    \begin{tabular}{c|cccc}
        \hline \\ [-1.7mm]
        Setting & Acc. & F1 & Precision & Recall \\[1mm]
        \hline
        \\ [-1.7mm]
        Vanilla & 87.96\,±\,1.22 & \textbf{48.59\,±\,1.75} & 41.47\,±\,3.30 & 59.17\,±\,3.13 \\[1mm]
        Baseline & 86.85\,±\,1.29 & 44.88\,±\,1.26 & 37.85\,±\,2.60 & 55.83\,±\,4.77 
        \\[1mm]
        \hline
        \\ [-1.7mm]
        \methodName-Van. & 86.31\,±\,0.88 & 48.45\,±\,0.98 & 38.02\,±\,1.68 & \textbf{67.10\,±\,3.15}
        \\[1mm]
        \methodName-Cor. & 88.93\,±\,0.60 & 47.18\,±\,1.51 & 43.71\,±\,2.29 & 51.73\,±\,4.75
        \\[1mm]
        \methodName-Sta. & \textbf{89.06\,±\,1.42} & 48.02\,±\,1.52 & \textbf{45.08\,±\,5.23} & 52.53\,±\,5.36
        \\ [1mm]
        \hline
        \multicolumn{5}{p{8.4cm}}{Accuracy, F1, Precision, Recall (\%) are reported on the test set. Van., Cor., and Sta. are the abbreviations of the vanilla, correction, and stabilization stages respectively.} \\
    \end{tabular}
    \label{tab:accuracy}
\end{table}
Models from the \methodName-Vanilla stage have the lowest averaged accuracy on the test set as it is trained only for $1,000$ iterations.
Although they achieve the highest recall in detecting hateful speeches, the obviously low precision indicates that many false positive decisions have been made, which agrees with the high FPPs visualized in Fig.~\ref{fig:attr_manual}.
During the correction stage, the biases of the training targets are mitigated, which results in the second highest precision.
As a side effect, constraints on the decision making process limit model performance in terms of recall. It also results in an average F1 score that is roughly $1.4\%$ lower than the highest obtained in the vanilla setting.
Consistent with the training loss, the final models (at the stabilization stage) delivered by the staged correction reach the highest accuracy among the competitors.
In fact, the performance is improved for all listed metrics in comparison with the results of the correction stage.
And even though the restrictions have been resolved, the mean precision increases rather than decreases.
On the contrary, the baseline, which also debiases the training target by suppressing unwanted attribution, attains a fairly poor performance with all the figures being lower than the vanilla ones.
A possible explanation for this observation is that the manually defined debugging list applied during the whole process limits the convergence of the training targets towards the global optimal.

In Fig.~\ref{fig:attr_compare}, we show the changes of averaged feature attribution (\textit{FA}) and averaged false positive proportion \textit{FPP} over iterations for the words extracted by \methodName.
For a better comparison between our debiasing framework and the baseline, we visualize the same statistics for the baseline.
We want to point out that the words behind the figures are not identical for the two methods as listed in Table~\ref{tab:word_table}.
Records of different training settings are distinguished by color. 
The solid and dotted lines indicate the averaged FA and FPP of the debiasing targets respectively.
A primary observation is the correlation between FA and FPP, especially for the synchronized drop at the beginning of the correction stage and the fluctuation with similar trends since the $1,500_{th}$ iteration.
The same observation holds for the plots of the baseline, whose debiasing list has an overlap with the \textit{wrong reasons} detected by \methodName.
But finding the correlation in the baseline requires a closer look as the plots are rather flattened.
The observed correlation supports our propositions and the usage of \textit{FPP} as a proxy of feature attribution for identifying wrong reasons for wrong decisions.

The baseline maintained the lower feature attribution for the selected words during the whole process.
Suppression of the incorrect reasoning contributes to the low FPP ($\approx0.1$), which is only one-third of the averaged FPP ($\approx0.3$) of the same words in a biased model as shown in Fig.~\ref{fig:attr_manual}.

As for the proposed method, the records for the staged correction do not start at the first iteration, because the misuse detector requires time to confirm that the model consistently discriminates against certain features.
Corresponding to the massive regularization penalty at the beginning of the correction stage, the averaged feature attribution of the automatically detected words is relatively high before the correction ($\#iter\leq1,000$) in the proposed debiasing framework.
And a sharp decrease of the FA at the start of the correction stage matches the drop of the regularization term reported in the training loss.
The feature attribution keeps falling during the correction and becomes competitive at the end of the second stage to the same value in the baseline.
Simultaneously, the FPP is experiencing a decrease.
After entering the stabilization stage, the FA of the biased words starts recovering because of the release of the restrictions.
In spite of the slight increase in the FA, the decline of FPP continues.
Starting from an extremely high point of the FPP, our debiasing framework with the staged correction outperforms the baseline in the middle of the correction stage and ends up with the FPP approaching $0$, even though it possesses a larger debiasing list.
The loosened restrictions in the staged correction allow the model better explore the solution space and are considered to be the main cause of the better performance compared to the baseline.

\subsection{Indirect impacts of correction} \label{sec:exp_impacts}
\begin{figure*}[tbp]
    \centering
    \begin{subfigure}[b]{\textwidth}
        \centering
        \includegraphics[width=1.\textwidth]{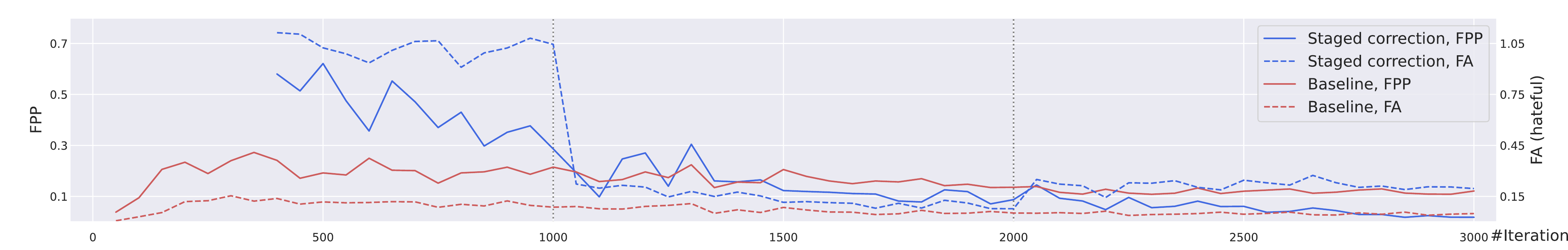}
        \vspace*{-6mm}
        \caption{Comparison between the staged correction and the baseline by FA and FPP}
        \label{fig:attr_compare}
    \end{subfigure}
    \begin{subfigure}[b]{\textwidth}
        \centering
        \vspace*{2mm}
        \includegraphics[width=1.\textwidth]{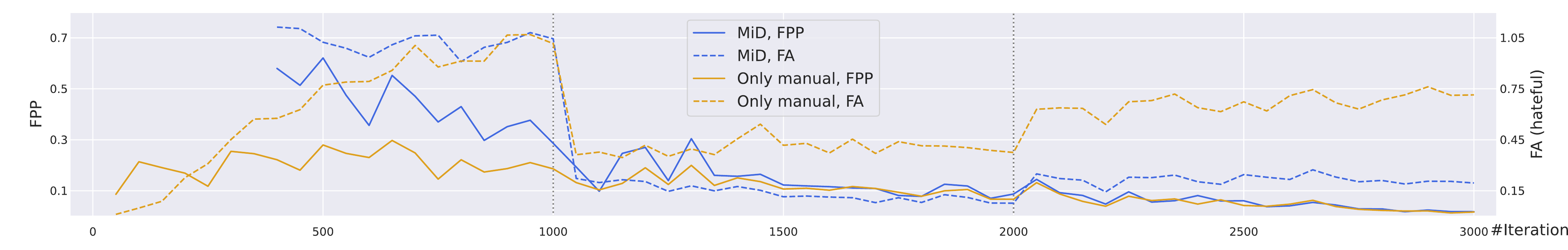}
        \vspace*{-6mm}
        \caption{Changes of FA and FPP for words included (``MiD'') and excluded (``Only manual'') in the staged correction}
        \label{fig:attr_indirect}
    \end{subfigure}
    \caption{Changes of feature attribution (FA) and false positive proportion (FPP) over iterations}
\end{figure*}
Despite the absence of some manually selected words in the automatically extracted word list, an impressive consistency between the changes of the feature attribution highlights itself in Fig.~\ref{fig:attr_indirect}.
The figure demonstrates the FPP and FA changes for two groups of words in the staged debiasing framework: i) words included in the staged correction process; ii) words omitted by the detector but contained in the manual list.
While the debiasing process suppresses the feature attribution of the debiasing targets, the words that are not explicitly included also get affected.
Since the experiment is carried on the BERT model, which represents similar tokens with similar embeddings, we believe that correction on single words also brings the effects to their semantic neighbors as they are implicitly connected through semantic meanings.

Motivated by the idea, we conduct further experiments to verify the existence of the indirect impacts.
The size of the debiasing list extracted by \methodName~with the standard parameters is fairly large.
To prevent the possible cross effects among multiple features from introducing additional complexity to the analysis, we reduce the number of debiasing targets by raising the two thresholds in \methodName~to tighten the definition of the wrong reason.
By applying the change, the shortened debiasing list (in Table~\ref{tab:shorten_table}) contains $6$ words.
\begin{table}[tbp]
    \caption{Shortened debiasing list}
    \centering
    \begin{tabular}{c}
        \hline \\[-1.7mm]
        \makecell[l]{\textbf{muslims} \textbf{islam} \textbf{jews} communist racist homo }\\
        \\[-1.7mm]
        \hline
    \end{tabular}
    \label{tab:shorten_table}
\end{table}
In addition to the automatically detected words, from the manual list, we select three words with high cosine similarities to the multiple terms in the previous list, namely ``\textit{muslim}''\footnote{This word is different from the word ``muslims'' as they are represented by different tokens in the classification task, the same holds for other terms.}, ``\textit{islamic}'', and ``\textit{jewish}''.
We also choose the two words ``\textit{homosexuality}'' and ``\textit{black}'' with each having medium similarity to one term and one word ``\textit{race}'', whose embedding is dissimilar to the others.
The correlation matrix of the changes in feature attribution and the cosine similarities among word embeddings are presented in Fig.~\ref{fig:indirect}.
In addition to the listed words 
\begin{figure}[htbp]
    \centering
    \begin{subfigure}[b]{0.24\textwidth}
        \centering
        \includegraphics[width=\textwidth]{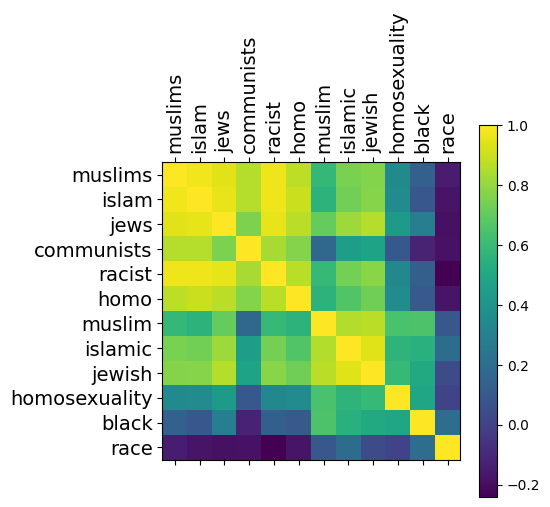}
        \caption{Correlation matrix of FA}
        \label{pic:corr}
    \end{subfigure}
    \hfill
    \begin{subfigure}[b]{0.24\textwidth}
        \centering
        \includegraphics[width=\textwidth]{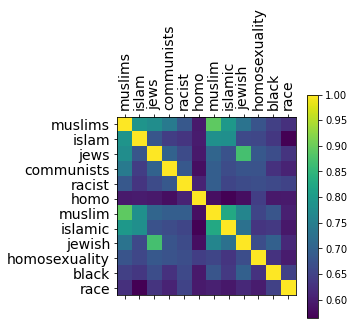}
        \caption{Cosine similarity}
        \label{pic:simlarity}
    \end{subfigure}
    \caption{Indirect impacts analysis}
    \label{fig:indirect}
\end{figure}
The bright square at the upper left corner of the correlation matrix in Fig.~\ref{pic:corr} is directly caused by the correction, all the $6$ words relating to the square are actively suppressed by the debiasing framework.
And the three words with high similarities to at least two tokens show a strong correlation in terms of feature attribution. 
For the remaining words, the lower correlations agree with the fact that their embeddings are less similar to the targeting ones.
The finding further confirms that suppressing feature attribution of words would indirectly affect their semantic neighbors in a similar way.
The more detailed analysis further reveals that overlaying the indirect effects (e.g., indirect effects originating from ``muslims'' and ``islam'' on ``islamic'') could strengthen the correlation.
Considering the semantically similar words shared by both lists in Table~\ref{tab:word_table}, we, therefore, argue that the proposed misuse detector has better coverage (direct and indirect) on the manual list than it appears to be.

\section{Conclusion} \label{sec:concl}
In this work, we proposed \methodName, a fully automatic misuse detector with the main purpose of uncovering biases learned by a text classifier during the training phase.
With the deployment of the detector, the training object is debiased through a staged correction process.
Our experiments show the outstanding coverage of the automatically extracted list on the discriminated terms.
Based on the dynamically constructed debiasing list, the model trained with the staged correction process achieves similar performance in terms of fairness to the baseline, which applies the strict restrictions for debiasing through the whole training process.
Moreover, our debiased model maintains the competitive classification performance in comparison with the biased model trained under no constraints.

We also study the indirect impacts of the word-level correction on the semantically connected words, which are underexplored in previous work.
Having the experimental results supporting our assumption about the existence of indirect impacts, we argue that the word list for debiasing should be minimized by excluding less related tokens to avoid introducing unexpected effects to the training target.
Additionally, the observation also strengthens the belief that debiasing is a multi-staged task through the NLP pipeline.
Bias-free embeddings from the preprocessing stage with sensitive group identifiers distant from sentimental words could reduce the risks of the unintended indirect impacts brought by bias mitigation at the fine-tuning stage into the model capacity.







\vspace{12pt}


\begin{thebibliography}{00}

\bibitem{ahn2021mitigating}
Jaimeen Ahn and Alice Oh.
\newblock Mitigating language-dependent ethnic bias in bert.
\newblock {\em arXiv preprint arXiv:2109.05704}, 2021.

\bibitem{anthony2016inside}
Andrew Anthony.
\newblock Inside the hate-filled echo chamber of racism and conspiracy
  theories.
\newblock {\em The guardian}, 18, 2016.

\bibitem{bhardwaj2021investigating}
Rishabh Bhardwaj, Navonil Majumder, and Soujanya Poria.
\newblock Investigating gender bias in bert.
\newblock {\em Cognitive Computation}, 13(4):1008--1018, 2021.

\bibitem{cai2021xproax}
Yi~Cai, Arthur Zimek, and Eirini Ntoutsi.
\newblock XPROAX-local explanations for text classification with progressive
  neighborhood approximation.
\newblock In {\em 2021 IEEE 8th International Conference on Data Science and
  Advanced Analytics (DSAA)}, pages 1--10. IEEE, 2021.

\bibitem{chen2020generating}
Hanjie Chen, Guangtao Zheng, and Yangfeng Ji.
\newblock Generating hierarchical explanations on text classification via
  feature interaction detection.
\newblock In {\em Proceedings of the 58th Annual Meeting of the Association for
  Computational Linguistics}, pages 5578--5593, 2020.

\bibitem{chen2019large}
Wenhu Chen, Yu~Su, Yilin Shen, Zhiyu Chen, Xifeng Yan, and William Wang.
\newblock How large a vocabulary does text classification need? a variational
  approach to vocabulary selection.
\newblock In {\em Proceedings of NAACL-HLT}, pages 3487--3497, 2019.

\bibitem{dixon2018measuring}
Lucas Dixon, John Li, Jeffrey Sorensen, Nithum Thain, and Lucy Vasserman.
\newblock Measuring and mitigating unintended bias in text classification.
\newblock In {\em Proceedings of the 2018 AAAI/ACM Conference on AI, Ethics,
  and Society}, pages 67--73, 2018.

\bibitem{garg2019counterfactual}
Sahaj Garg, Vincent Perot, Nicole Limtiaco, Ankur Taly, Ed~H Chi, and Alex
  Beutel.
\newblock Counterfactual fairness in text classification through robustness.
\newblock In {\em Proceedings of the 2019 AAAI/ACM Conference on AI, Ethics,
  and Society}, pages 219--226, 2019.

\bibitem{hardt2016equality}
Moritz Hardt, Eric Price, and Nati Srebro.
\newblock Equality of opportunity in supervised learning.
\newblock {\em Advances in neural information processing systems}, 29, 2016.

\bibitem{iosifidis2019fae}
Vasileios Iosifidis, Besnik Fetahu, and Eirini Ntoutsi.
\newblock FAE: A fairness-aware ensemble framework.
\newblock In {\em 2019 IEEE International Conference on Big Data (Big Data)},
  pages 1375--1380. IEEE, 2019.

\bibitem{iosifidis2019adafair}
Vasileios Iosifidis and Eirini Ntoutsi.
\newblock Adafair: Cumulative fairness adaptive boosting.
\newblock In {\em Proceedings of the 28th ACM International Conference on
  Information and Knowledge Management}, pages 781--790, 2019.

\bibitem{jin2019towards}
Xisen Jin, Zhongyu Wei, Junyi Du, Xiangyang Xue, and Xiang Ren.
\newblock Towards hierarchical importance attribution: Explaining compositional
  semantics for neural sequence models.
\newblock In {\em International Conference on Learning Representations}, 2019.

\bibitem{kennedy2018gab}
Brendan Kennedy, et al.
\newblock The gab hate corpus: A collection of 27k posts annotated for hate
  speech.

\bibitem{kennedy2020contextualizing}
Brendan Kennedy, Xisen Jin, Aida~Mostafazadeh Davani, Morteza Dehghani, and
  Xiang Ren.
\newblock Contextualizing hate speech classifiers with post-hoc explanation.
\newblock In {\em Proceedings of the 58th Annual Meeting of the Association for
  Computational Linguistics}, pages 5435--5442, 2020.

\bibitem{kenton2019bert}
Jacob Devlin, Ming-Wei~Chang Kenton, and Lee~Kristina Toutanova.
\newblock Bert: Pre-training of deep bidirectional transformers for language
  understanding.
\newblock In {\em Proceedings of NAACL-HLT}, pages 4171--4186, 2019.

\bibitem{kiritchenko2018examining}
Svetlana Kiritchenko and Saif Mohammad.
\newblock Examining gender and race bias in two hundred sentiment analysis
  systems.
\newblock In {\em Proceedings of the Seventh Joint Conference on Lexical and
  Computational Semantics}, pages 43--53, 2018.

\bibitem{li2022survey}
Qian Li, et al.
\newblock A survey on text classification: From traditional to deep learning.
\newblock {\em ACM Transactions on Intelligent Systems and Technology (TIST)},
  13(2):1--41, 2022.

\bibitem{lundberg2017unified}
Scott~M Lundberg and Su-In Lee.
\newblock A unified approach to interpreting model predictions.
\newblock {\em Advances in neural information processing systems}, 30, 2017.

\bibitem{muller2020ethics}
Vincent~C M{\"u}ller.
\newblock Ethics of artificial intelligence and robotics.
\newblock {\em The Stanford Encyclopedia of Philosophy}, 2020.

\bibitem{murdoch2018beyond}
W~James Murdoch, Peter~J Liu, and Bin Yu.
\newblock Beyond word importance: Contextual decomposition to extract
  interactions from lstms.
\newblock In {\em International Conference on Learning Representations}, 2018.

\bibitem{naumann2021consequence}
Philip Naumann and Eirini Ntoutsi.
\newblock Consequence-aware sequential counterfactual generation.
\newblock In {\em Joint European Conference on Machine Learning and Knowledge
  Discovery in Databases}, pages 682--698. Springer, 2021.

\bibitem{peters2017semi}
Matthew~E Peters, Waleed Ammar, Chandra Bhagavatula, and Russell Power.
\newblock Semi-supervised sequence tagging with bidirectional language models.
\newblock In {\em Proceedings of the 55th Annual Meeting of the Association for
  Computational Linguistics (Volume 1: Long Papers)}, pages 1756--1765, 2017.

\bibitem{ribeiro2016should}
Marco~Tulio Ribeiro, Sameer Singh, and Carlos Guestrin.
\newblock " why should i trust you?" explaining the predictions of any
  classifier.
\newblock In {\em Proceedings of the 22nd ACM SIGKDD international conference
  on knowledge discovery and data mining}, pages 1135--1144, 2016.

\bibitem{rieger2020interpretations}
Laura Rieger, Chandan Singh, William Murdoch, and Bin Yu.
\newblock Interpretations are useful: penalizing explanations to align neural
  networks with prior knowledge.
\newblock In {\em International conference on machine learning}, pages
  8116--8126. PMLR, 2020.

\bibitem{rudinger2018gender}
Rachel Rudinger, Jason Naradowsky, Brian Leonard, and Benjamin Van~Durme.
\newblock Gender bias in coreference resolution.
\newblock In {\em Proceedings of NAACL-HLT}, pages 8--14, 2018.

\bibitem{sarzynska2021detecting}
Justyna Sarzynska-Wawer, et al.
\newblock Detecting formal thought disorder by deep contextualized word
  representations.
\newblock {\em Psychiatry Research}, 304:114135, 2021.

\bibitem{schick2021self}
Timo Schick, Sahana Udupa, and Hinrich Sch{\"u}tze.
\newblock Self-diagnosis and self-debiasing: A proposal for reducing
  corpus-based bias in nlp.
\newblock {\em Transactions of the Association for Computational Linguistics},
  9:1408--1424, 2021.

\bibitem{smilkov2017smoothgrad}
Daniel Smilkov, Nikhil Thorat, Been Kim, Fernanda Vi{\'e}gas, and Martin
  Wattenberg.
\newblock Smoothgrad: removing noise by adding noise.
\newblock {\em arXiv preprint arXiv:1706.03825}, 2017.

\bibitem{vstrumbelj2014explaining}
Erik {\v{S}}trumbelj and Igor Kononenko.
\newblock Explaining prediction models and individual predictions with feature
  contributions.
\newblock {\em Knowledge and information systems}, 41(3):647--665, 2014.

\bibitem{vaswani2017attention}
Ashish Vaswani, et al.
\newblock Attention is all you need.
\newblock {\em Advances in neural information processing systems}, 30, 2017.

\bibitem{wallace2019allennlp}
Eric Wallace, et al.
\newblock Allennlp interpret: A framework for explaining predictions of nlp
  models.
\newblock In {\em Proceedings of the 2019 Conference on Empirical Methods in
  Natural Language Processing and the 9th International Joint Conference on
  Natural Language Processing (EMNLP-IJCNLP): System Demonstrations}, pages
  7--12, 2019.

\bibitem{wolf2020transformers}
Thomas Wolf, et al.
\newblock Transformers: State-of-the-art natural language processing
\newblock In {\em Proceedings of the 2020 conference on empirical methods in natural language processing: system demonstrations}, pages 38--45, 2020.

\bibitem{yao2021refining}
Huihan Yao, Ying Chen, Qinyuan Ye, Xisen Jin, and Xiang Ren.
\newblock Refining language models with compositional explanations.
\newblock {\em Advances in Neural Information Processing Systems}, 34, 2021.

\bibitem{zhang2020demographics}
Guanhua Zhang, et al.
\newblock Demographics should not be the reason of toxicity: Mitigating
  discrimination in text classifications with instance weighting.
\newblock In {\em Proceedings of the 58th Annual Meeting of the Association for
  Computational Linguistics}, pages 4134--4145, 2020.

\bibitem{zhang2021explain}
Zijian Zhang, Koustav Rudra, and Avishek Anand.
\newblock Explain and predict, and then predict again.
\newblock In {\em Proceedings of the 14th ACM International Conference on Web
  Search and Data Mining}, pages 418--426, 2021.

\bibitem{zhao2018gender}
Jieyu Zhao, Tianlu Wang, Mark Yatskar, Vicente Ordonez, and Kai-Wei Chang.
\newblock Gender bias in coreference resolution: Evaluation and debiasing
  methods.
\newblock In {\em Proceedings of the 2018 Conference of the North American
  Chapter of the Association for Computational Linguistics: Human Language
  Technologies}, volume~2, 2018.

\bibitem{zhuang2022randomness}
Donglin Zhuang, Xingyao Zhang, Shuaiwen Song, and Sara Hooker.
\newblock Randomness in neural network training: Characterizing the impact of
  tooling.
\newblock {\em Proceedings of Machine Learning and Systems}, 4, 2022.

\end{thebibliography}
\end{document}